\theoremstyle{plain}
\newtheorem{theorem}{Theorem}[section]
\newtheorem{lemma}[theorem]{Lemma}
\theoremstyle{definition}
\newtheorem{definition}[theorem]{Definition}
\theoremstyle{remark}
\DeclareMathOperator{\tv}{TV}
\DeclareMathOperator{\kl}{KL}
\def\BibTeX{{\rm B\kern-.05em{\sc i\kern-.025em b}\kern-.08em
    T\kern-.1667em\lower.7ex\hbox{E}\kern-.125emX}}
\acrodef{DP-SGD}{Differentially Private Stochastic Gradient Descent}
\acrodef{VAE}{Variational Autoencoder}
\acrodef{LCD-VAE}{Lipschitz-Constrained Decoder Variational Autoencoder}
\acrodef{FL}{Federated Learning}
\acrodef{DP}{Differential Privacy}
\acrodef{IID}{Independent and Identically Distributed}
\acrodef{Non-IID}{Non-Independent and Identically Distributed}
\acrodef{SGD}{Stochastic Gradient Descent}
\acrodef{GAN}{Generative Adversarial Network}
\acrodef{MSE}{Mean Squared Error}
\acrodef{ELBO}{Evidence Lower Bound}
\acrodef{KL}{Kullback-Leibler}
\acrodef{TV}{Total Variation}
\acrodef{FID}{Fr\'{e}chet Inception Distance}
\acrodef{ReLU}{Rectified Linear Unit}
\title{ALIGN-FL: Architecture-independent Learning through Invariant Generative component sharing in Federated Learning}
\author{
\IEEEauthorblockN{Mayank Gulati, Benedikt Groß, Gerhard Wunder}
\IEEEauthorblockA{\textit{Cybersecurity and AI Group, Freie Universität Berlin}\\
\{mayank.gulati, benedikt.gross, gerhard.wunder\}@fu-berlin.de}
\thanks{Accepted at 2025 International Conference on Cyber-Enabled Distributed Computing and Knowledge Discovery (CyberC).}
}
\begin{document}
\maketitle
\marginnote{\rotatebox{90}{%
  \footnotesize DOI: 10.1109/CyberC66434.2025.00025%
}}[-3cm]

\begin{abstract}
We present ALIGN-FL, a novel approach to distributed learning that addresses the challenge of learning from highly disjoint data distributions through selective sharing of generative components. Instead of exchanging full model parameters, our framework enables privacy-preserving learning by transferring only generative capabilities across clients, while the server performs global training using synthetic samples. Through complementary privacy mechanisms, DP-SGD with adaptive clipping and Lipschitz regularized VAE decoders, and a stateful architecture supporting heterogeneous clients, we experimentally validate our approach on MNIST and Fashion-MNIST datasets with cross-domain outliers. Our analysis demonstrates that both privacy mechanisms effectively map sensitive outliers to typical data points while maintaining utility in extreme Non-IID scenarios typical of cross-silo collaborations.

\end{abstract}

\begin{IEEEkeywords}
 Client-invariant Learning, \ac{FL}, Privacy-preserving Generative Models, \ac{Non-IID}, Heterogeneous Architectures
\end{IEEEkeywords}

\section{Introduction}
The increasing scale of machine learning models demands larger and more diverse training datasets, yet privacy concerns and data silos restrict direct data sharing. While \ac{FL} offers a promising direction by enabling collaborative model training without raw data exchange, it faces significant challenges when dealing with extreme data heterogeneity across clients. This challenge becomes particularly acute in scenarios where clients have completely non-overlapping data distributions - a common occurrence in real-world settings where organizations possess distinct data types, such as medical specialists with different imaging modalities or manufacturers with distinct product lines.

Traditional federated approaches like FedAvg \cite{mcmahan2017communication} and FedProx \cite{li2020federated} attempt to handle this heterogeneity through parameter averaging and regularization. However, these methods fundamentally assume some degree of overlap in client data distributions, making them ineffective when clients have completely disjoint data. In such cases, direct parameter or gradient sharing fails to capture and transfer knowledge effectively across these disparate domains.

We address these challenges through ALIGN-FL, a novel framework that reimagines federated learning through selective sharing of generative components. Instead of transmitting complete model parameters, we focus on transferring only the generative capabilities across clients, enabling consistent representation learning despite extreme differences in local data distributions. This design choice yields multiple benefits: efficient knowledge transfer, enhanced privacy preservation, and reduced communication overhead. Importantly, our approach achieves client-invariant unsupervised learning without requiring labeled data for alignment. 

ALIGN-FL is particularly well-suited for cross-silo \ac{FL} scenarios, where participants are typically organizations (such as hospitals, financial institutions, or manufacturers) with substantial computational resources but highly distinct data distributions. In these settings, our generative component sharing approach enables knowledge transfer while addressing the practical constraints of cross-silo environments: strict privacy requirements, heterogeneous system architectures, and completely non-overlapping data domains. Unlike cross-device \ac{FL} which focuses on efficiency for resource-constrained devices like considered in \cite{10745482}, our approach prioritizes robust knowledge sharing across organizational boundaries without compromising domain-specific expertise or privacy.
\subsection{Our contributions}
\begin{enumerate}
\item A privacy-preserving federated framework that enables knowledge sharing between clients with non-overlapping data distributions through selective sharing of generative components.
\item Two complementary privacy-preserving mechanisms: \ac{DP-SGD} with adaptive clipping and Lipschitz-constrained \ac{VAE} decoders, offering different privacy-utility trade-offs.
\item A stateful, architecture-independent client design that maintains local progress while contributing to global knowledge through synthetic data generation.
\item A comprehensive validation framework that evaluates both representation quality and privacy preservation, including targeted analysis of sensitive outlier handling.
\end{enumerate}
\section{Related Studies}
Generative models and their privacy face parallel challenges on multiple fronts. First, the increase in model size in large-scale machine learning models requires more high-quality training data \cite{hestness2017deep}; at the same time, privacy risks escalate as data dimensionality and model size increase \cite{dwork2014algorithmic}. Synthetic data generation has emerged as a promising approach to address this dual challenge. For instance, NVIDIA's Nemotron-4 340B model demonstrates the scalability of synthetic data, with over 98\% of its alignment data being synthetically generated \cite{adler2024nemotron}.

Data heterogeneity presents another significant challenge, particularly in \ac{FL} settings \cite{ye2023heterogeneous,li2020federated}. Recent work like FedCiR \cite{li2024fedcir} addresses \ac{Non-IID} data through classifier alignment by exchanging classifier outcomes and global parameter aggregation. While this approach shares our goal of handling heterogeneous data, our method takes a fundamentally different direction through generative modeling and synthetic data generation.

The practicality of synthetic data faces its own challenges, notably the phenomenon of model collapse \cite{briesch2023large, shumailov2024ai,futowards,gerstgrasser2024model}. This occurs when recursively generated data loses the tail of the original distribution, reducing output diversity. This challenge is particularly relevant for privacy considerations, as outliers in these distribution tails often represent the most privacy-sensitive data \cite{dwork2014algorithmic}. Recent theoretical work has established \ac{DP} guarantees for synthetic data generated by Lipschitz continuous \acp{VAE} \cite{gross2023differentially}. However, the privacy implications for model parameters themselves remain an open question, particularly in federated settings with heterogeneous data distributions.
\section{Problem Statement}
Consider a distributed system with $N$ clients, where each client $n$ has a local dataset $\mathcal{D}_n$ drawn from a client-specific distribution $\mathcal{P}_n(x)$. The distributions are highly non-overlapping, meaning:
\begin{equation}
\text{supp}(\mathcal{P}_i) \cap \text{supp}(\mathcal{P}_j) \approx \emptyset \quad \text{for } i \neq j
\end{equation}
This scenario is characteristic of cross-silo \ac{FL} where participating organizations possess substantial computational resources but completely distinct data domains. The objective is to learn a global model e.g., \ac{VAE} that not only captures the union of client distributions $\cup_{n=1}^N \mathcal{P}_n(x)$ for generation tasks but also learns a unified latent representation useful for downstream tasks. Each client maintains a local encoder-decoder pair $(q_{\phi_n}, p_{\theta_n})$, which must be aggregated into a global model $(q_{\phi}, p_{\theta})$ while preserving client privacy and data utility.

\subsection{Motivating Examples}

Consider two scenarios of non-overlapping data distributions across $5$ clients:
\begin{enumerate}
    \item \textbf{MNIST Partition}: Each client has unique digit pairs $\{0,1\}, \{2,3\}, \{4,5\}, \{6,7\}, \{8,9\}$, representing a clear partition of the digit space.
    \item \textbf{Domain-Specific Text}: Clients possess distinct text domains - scientific literature, legal documentation, medical records, financial reports, and technical manuals, each with unique vocabulary and semantic patterns.
\end{enumerate}
Both cases are illustrating the challenge of learning a unified representation while preserving domain privacy and generation quality.

\section{Background}
\subsection{Variational Autoencoders}
\acp{VAE} \cite{kingma2013auto} learn compressed latent representations through encoder-decoder pairs. The encoder $q_\phi(z|x)$ maps inputs to latent distributions $\mathcal{N}(\mu_\phi(x), \sigma^2_\phi(x))$, while the decoder $p_\theta(x|z)$ reconstructs data from latent samples. Training maximizes the \ac{ELBO}:
\begin{equation} \label{eq:elbo}
\mathcal{L}(\theta, \phi; x) = \mathbb{E}_{q_\phi(z|x)}[\log p_\theta(x|z)] - D_{\text{KL}}(q_\phi(z|x)||p(z))
\end{equation}

For a mini-batch of size $B$ with Gaussian assumptions, this simplifies to:
\begin{equation} \label{eq:loss}
\begin{split}
\mathcal{L}_{\text{VAE}} &= \frac{1}{B}\sum_{i=1}^B \Big(\|x_i - f_\theta(z_i)\|^2_2 \\
&\quad + \frac{1}{2}\sum_{j=1}^J(\sigma^2_{\phi,j} + \mu^2_{\phi,j} - \log\sigma^2_{\phi,j} - 1)\Big)
\end{split}
\end{equation}
where $z_i = \mu_\phi(x_i) + \sigma_\phi(x_i) \odot \xi_i$, with $\xi_i \sim \mathcal{N}(0,I)$.

\subsection{Privacy Mechanisms}
\subsubsection{Differential Privacy}
\ac{DP-SGD} \cite{abadi2016deep} provides privacy guarantees by clipping per-example gradients and adding calibrated noise:
\begin{equation}
\theta_{t+1} = \theta_t - \eta\left(\frac{1}{B}\sum_{i=1}^B \tilde{g}_i + \mathcal{N}(0, \sigma^2C^2\mathbf{I})\right)
\end{equation}
where $\tilde{g}_i$ represents clipped gradients, $C$ is the clipping norm, and $\sigma$ is the noise multiplier. Our implementation extends this with adaptive clipping based on client-specific gradient distributions.

\subsubsection{Lipschitz Continuity}
A function $f$ is $L$-Lipschitz continuous if for all inputs $x, y$:
\begin{equation}
|f(x) - f(y)| \leq L|x - y|
\end{equation}

\textit{Privacy Connection:} Lipschitz continuity naturally limits a model's sensitivity to individual training examples by bounding output changes. For an $L$-Lipschitz model, the maximum influence of any single training point is bounded by $L$, providing a natural privacy mechanism that we leverage in our \ac{LCD-VAE} approach. While not equivalent to formal \ac{DP}, this property offers complementary privacy benefits with different utility trade-offs (further discussed in \cref{privacy_intuition}).
\vspace{-5pt}
\begin{table}[htbp]
\caption{Data Distribution Across Clients}
\label{tab:data-splits}
\centering
\footnotesize
\begin{tabular}{ccc}
\toprule
Client & MNIST & Fashion-MNIST \\
\midrule
1 & \{0,1\}$^*$ & \{T-shirt, Trouser\}$^\dagger$ \\
2 & \{2,3\} & \{Pullover, Dress\} \\
3 & \{4,5\} & \{Coat, Sandal\} \\
4 & \{6,7\} & \{Shirt, Sneaker\} \\
5 & \{8,9\} & \{Bag, Boot\} \\
\bottomrule
\end{tabular}
\\[0.15cm]
\footnotesize $^*$Fashion-MNIST outliers, $^\dagger$MNIST outliers
\end{table}
\subsection{Assumptions}
\label{assumptions}
For privacy analysis, we intentionally introduced outlier points to selected clients requiring enhanced privacy protection as highlighted in \cite{lui2015outlier}. These outliers comprise $1-5\%$ of an individual client's local dataset. For simplicity, we restrict outliers to a single client in our study.
\begin{figure}[!htb]
\centering
    \includegraphics[width=0.99\columnwidth]{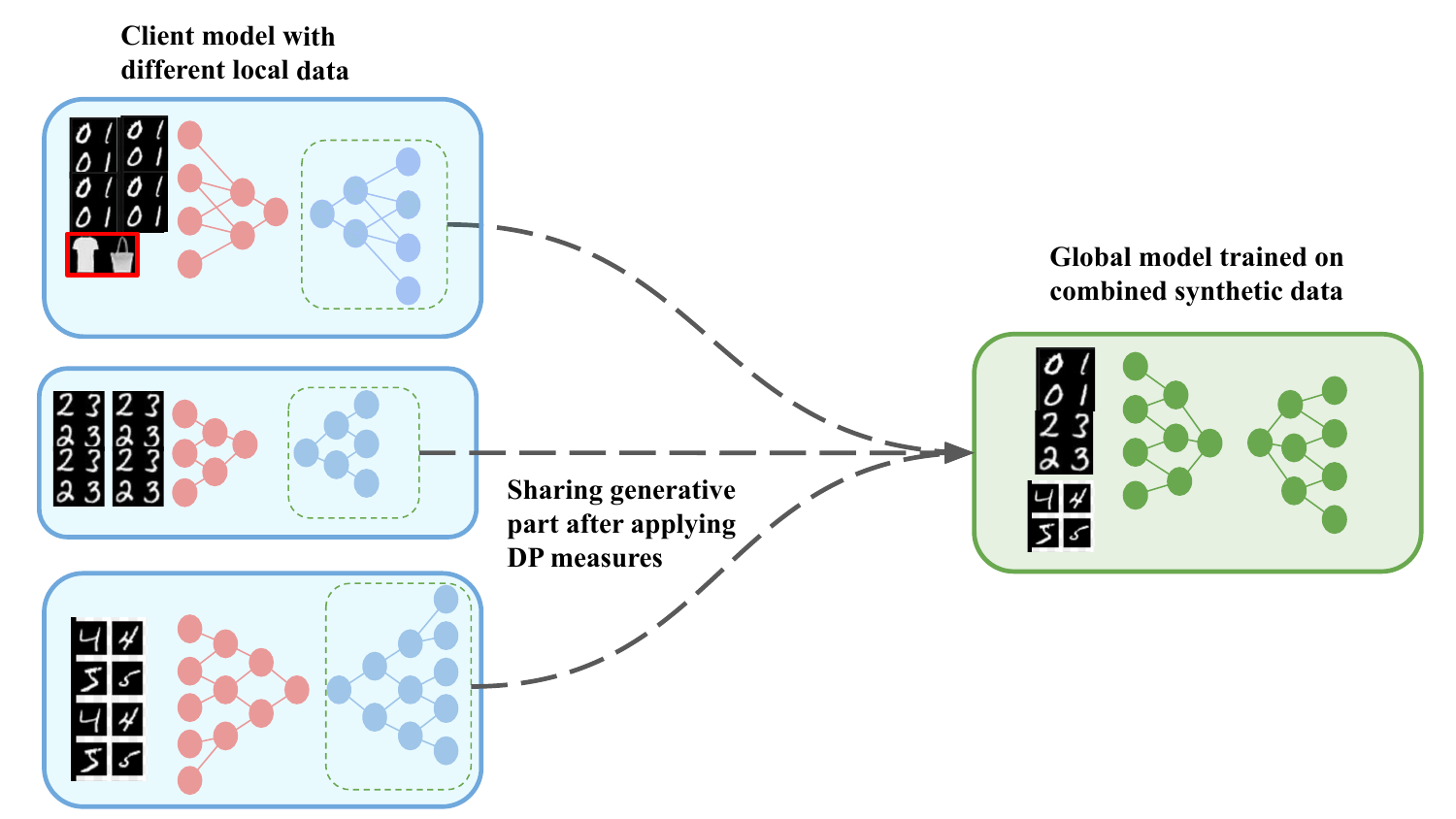}
    \caption{Distributed learning workflow for ALIGN-FL.}
    \label{fig:align-fl-workflow}
\end{figure}
\vspace{-5pt}

\vspace{-5pt}
\begin{table}[htbp]
\centering
\caption{ALIGN-FL Performance on Different Datasets}
\label{tab:align-fl-performance}
\footnotesize
\setlength{\tabcolsep}{4pt}
\begin{tabular}{llccc}
\toprule
Dataset & Method & FID$\downarrow$ & Acc\%$^{\dagger}$$\uparrow$ & F1\%$^{\dagger}$$\uparrow$ \\
\midrule
\multirow{3}{*}{MNIST} & No DP & 81.91 & 74.25 & 74.06\\
& DP-SGD & 174.01 & 51.75 & 51.8 \\
& LCD-VAE & 98.46 & 63.25& 60.01 \\
\midrule
\multirow{3}{*}{F-MNIST} & No DP & 117.97 & 69.75& 69.33 \\
& DP-SGD & 262.82& 52.5 & 50.51 \\
& LCD-VAE & 142.63 & 63& 62.51 \\
\bottomrule
\end{tabular}
\\[0.2cm]
\end{table}
\vspace{-5pt}

\vspace{-2pt}
\begin{table}[htbp]
\centering
\caption{Comparison of FL Algorithms on MNIST}
\label{tab:ablation}
\footnotesize
\setlength{\tabcolsep}{4pt}
\begin{tabular}{lccc}
\toprule
Algorithm & FID$\downarrow$ & Acc\%$^{\dagger}$$\uparrow$ & F1\%$^{\dagger}$$\uparrow$ \\
\midrule
FedAvg & 148.70 & 43.75 & 40.25 \\
FedProx & 150.14 & 38.50 & 34.07 \\
MOON-Base & 230 & 38.75 & 34.35 \\
\textbf{ALIGN-FL} & \textbf{81.91} &\textbf{ 74.25} & \textbf{74.06} \\
ContrastAvgSynth& 99.68 & 62.75 & 59.05 \\
ContrastSynthOnly& 89.64 & 52.5 & 50.66 \\
\bottomrule
\end{tabular}
\\[0.2cm]
\footnotesize $^{\dagger}$Classification on test set using multiclass logistic regression head
\end{table}
\vspace{-15pt}

\section{Our Approach: ALIGN-FL}
ALIGN-FL described in \cref{alg:align-fl} presents a novel approach to \ac{FL} that enables collaboration between clients with heterogeneous model architectures while maintaining privacy guarantees. The framework leverages the power of generative models (such as \acp{VAE} and \acp{GAN}) across distributed clients, where each client trains their local model using privacy-preserving techniques like \ac{DP-SGD} or Lipschitz regularization. Instead of sharing entire models, gradients or raw data, clients extract and share only their generative components (decoder in case of \ac{VAE}) with the central server, significantly reducing communication overhead while preserving architectural independence.

The server aggregates knowledge from diverse clients through a unique synthetic data generation process. After receiving generative components from clients, the server samples synthetic data from each client's generative model to create a combined dataset. This synthetic dataset captures the distributed knowledge across clients while maintaining privacy, as no raw data is shared. The server then trains a global model on this synthetic dataset, effectively learning from the collective knowledge of all clients without requiring architectural uniformity or compromising privacy guarantees.

This approach addresses several key challenges in \ac{FL}: it accommodates heterogeneous client architectures, preserves privacy through local \ac{DP} mechanisms and synthetic data generation, reduces communication overhead by sharing only generative components, and supports the asynchronous participation patterns typical in cross-silo environments. A crucial implementation consideration is that clients maintain their state between communication rounds to continuously enhance the global model through synthetic data generation at the server level, unlike vanilla \ac{FL} where client states are overwritten by global model updates. This stateful nature necessitates careful tracking of per-client privacy budgets when using \ac{DP-SGD}, ensuring rigorous privacy accounting while enabling asynchronous updates through parallel training.

\textbf{Key Departure from Traditional FL:} Unlike traditional FL that redistributes global parameters, ALIGN-FL centralizes the server model while clients contribute generative knowledge globally, avoiding parameter averaging degradation that plagues FedAvg and FedProx (\cref{tab:ablation}). Component $G_{c_i}^t$ is extracted from client model $M_{c_i}$ by copying only the generative module (e.g., VAE decoder parameters $p_{\theta_i}^t$) while keeping encoders client-side, preserving privacy, reducing communication overhead, and maintaining architecture independence.

\begin{algorithm}[tb]
\caption{ALIGN-FL}\label{alg:align-fl}
\begin{algorithmic}[1]
\STATE {\bfseries Input:} Number of clients $N$, local epochs $E$, communication rounds $T$
\STATE {\bfseries Initialize:} Server model ${M}_s$, client models ${M_{{c}_{i}}}$ with generative components ${G_{{c}_{i}}}$ for $i \in [1,N]$
\STATE {\bfseries Privacy Parameters:} Per-client DP budgets $(\epsilon_i, \delta_i)$ for $i \in [1,N]$
\FOR{round $t=1$ to $T$}
\STATE // Client Local Training Phase
\FOR{each client $i \in [1,N]$ in parallel}
\FOR{epoch $e=1$ to $E$}
\STATE Train ${M_{{c}_{i}}}$ on local data $\mathcal{D}_i$ using DP-SGD with $(\epsilon_i, \delta_i)$ or Lipschitz-regularized VAE
\ENDFOR
\STATE Extract generative component ${G^t_{{c}_{i}}}$ from ${M_{{c}_{i}}}$ 
\STATE Send ${G^t_{{c}_{i}}}$ to server
\ENDFOR
\STATE // Server Global Training Phase  
\STATE Initialize synthetic dataset $\mathcal{D}_s = \emptyset$
\FOR{each client $i \in [1,N]$}
\STATE Sample synthetic data $\mathcal{D}^{syn}_i \sim {G^t_{{c}_{i}}}$
\STATE $\mathcal{D}_s = \mathcal{D}_s \cup \mathcal{D}^{syn}_i$
\ENDFOR
\STATE Train server model ${M}_s$ on combined synthetic data $\mathcal{D}_s$
\ENDFOR
\STATE {\bfseries Return:} Final server model ${M}_s$
\end{algorithmic}
\end{algorithm}



We present several variants of client local training within ALIGN-FL framework, addressing different privacy scenarios alongside a baseline without privacy measures.
\subsection[DP-SGD on Generative Component of VAE only]{\ac{DP-SGD} on Generative Component of \ac{VAE} only}
Unlike GS-WGAN \cite{chen2020gs}, which achieved privacy by applying gradient sanitization only to the generator, this selective approach fails for \acp{VAE} due to their tightly coupled encoder-decoder training dynamics. Our experiments show that applying \ac{DP-SGD} solely to the \ac{VAE}'s decoder yields poor privacy-utility trade-offs, manifesting in two ways: (1) direct decoder sampling fails to generate coherent samples for sensitive data, and (2) the encoder-decoder pipeline produces artifacts resembling training data outliers. Detailed experimental results supporting these findings are presented in Appendix~\ref{appendix:dp-sgd_decoder}.

\subsection[DP-SGD on Full VAE Architecture]{\ac{DP-SGD} on Full \ac{VAE} Architecture}
Inspired by adaptive clipping techniques in \cite{andrew2021differentially}, we employ \ac{DP-SGD} with adaptive clipping based on client-side update norm quantiles. Rather than implementing their server-client bit sharing mechanism for clip norm adjustment, we adopt a simpler approach where each client independently computes and applies its local median for gradient clipping, eliminating the need for hyperparameter tuning.

\subsection[Lipschitz-Constrained Decoder VAE (LCD-VAE)]{Lipschitz-Constrained Decoder \ac{VAE} (\ac{LCD-VAE})}
In this variant of client-side training, we augment the standard \ac{VAE} objective (\cref{eq:loss}) with a gradient penalty term to enforce Lipschitz continuity on the decoder. Given interpolated points $\tilde{z}$ in the latent space, the gradient penalty is formulated as:
\begin{equation}
\mathcal{L}_{\text{GP}} = \mathbb{E}_{\tilde{z}}\left[\left(|\nabla_{\tilde{z}} f_\theta(\tilde{z})|_2 - 1\right)^2\right]
\end{equation}
where $|\nabla_{\tilde{z}} f_\theta(\tilde{z})|_2$ computes the gradient norm with respect to the latent points. The complete training objective is:
\begin{equation}
\mathcal{L}_{\text{total}} = \mathcal{L}_{\text{VAE}} + \lambda_{\text{GP}}\mathcal{L}_{\text{GP}}
\end{equation}
While gradient penalties have been employed in \acp{GAN} literature primarily for training stability of discriminator \cite{arjovsky2017wasserstein, chen2020gs}, our formulation serves a distinct purpose: constraining the decoder's sensitivity. This restriction limits the decoder's capacity to reconstruct points from the latent space, particularly affecting outliers and potentially sensitive samples in the data distribution.
\begin{figure*}[!htb]
  \centering
  \footnotesize
  \subfloat[Without DP client training]{\includegraphics[width=0.31\textwidth]{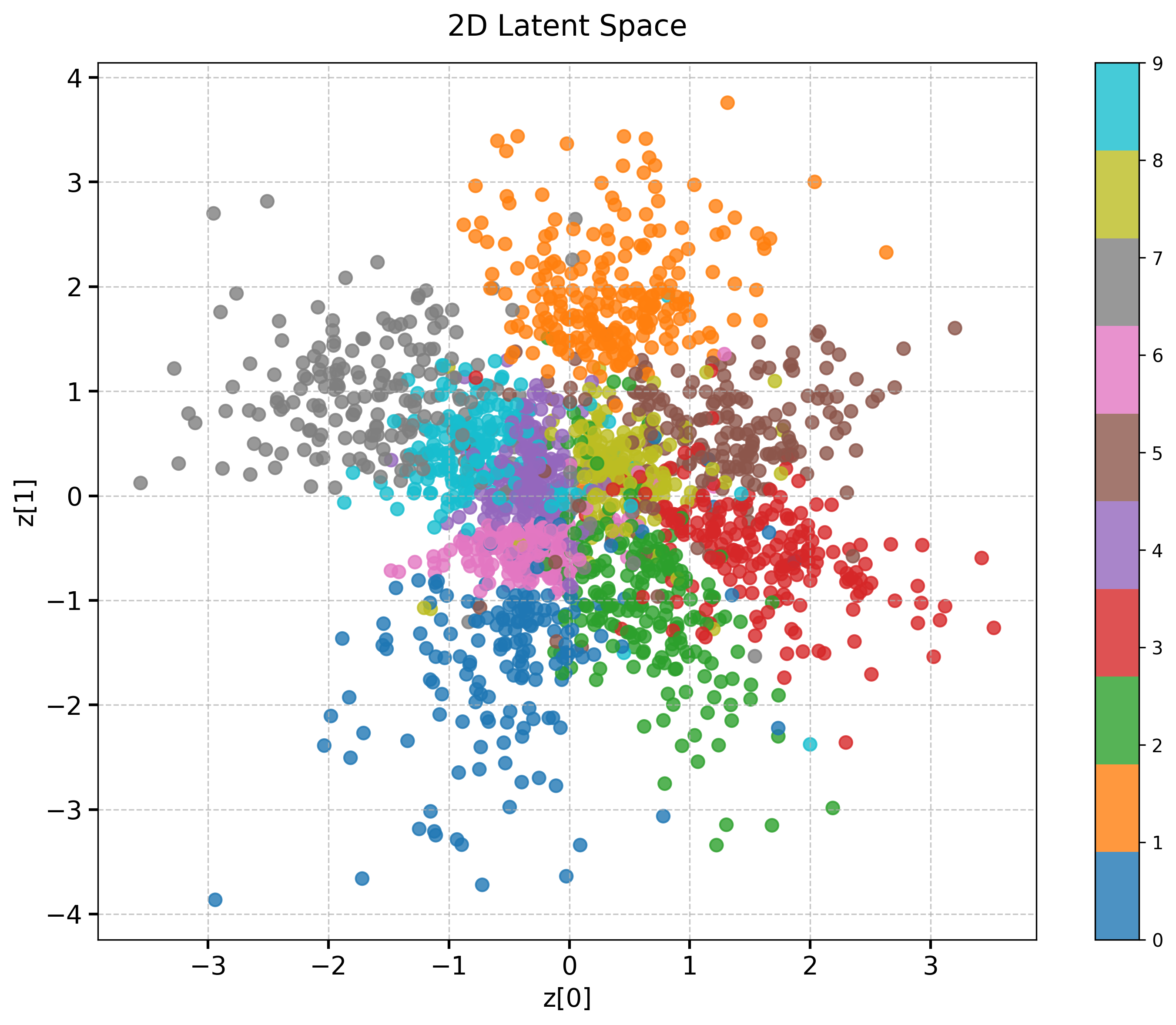}}
\hfill
  \footnotesize
  \subfloat[With DP-SGD $(\epsilon=10, \delta=10^{-5})$ client training]{\includegraphics[width=0.31\textwidth]{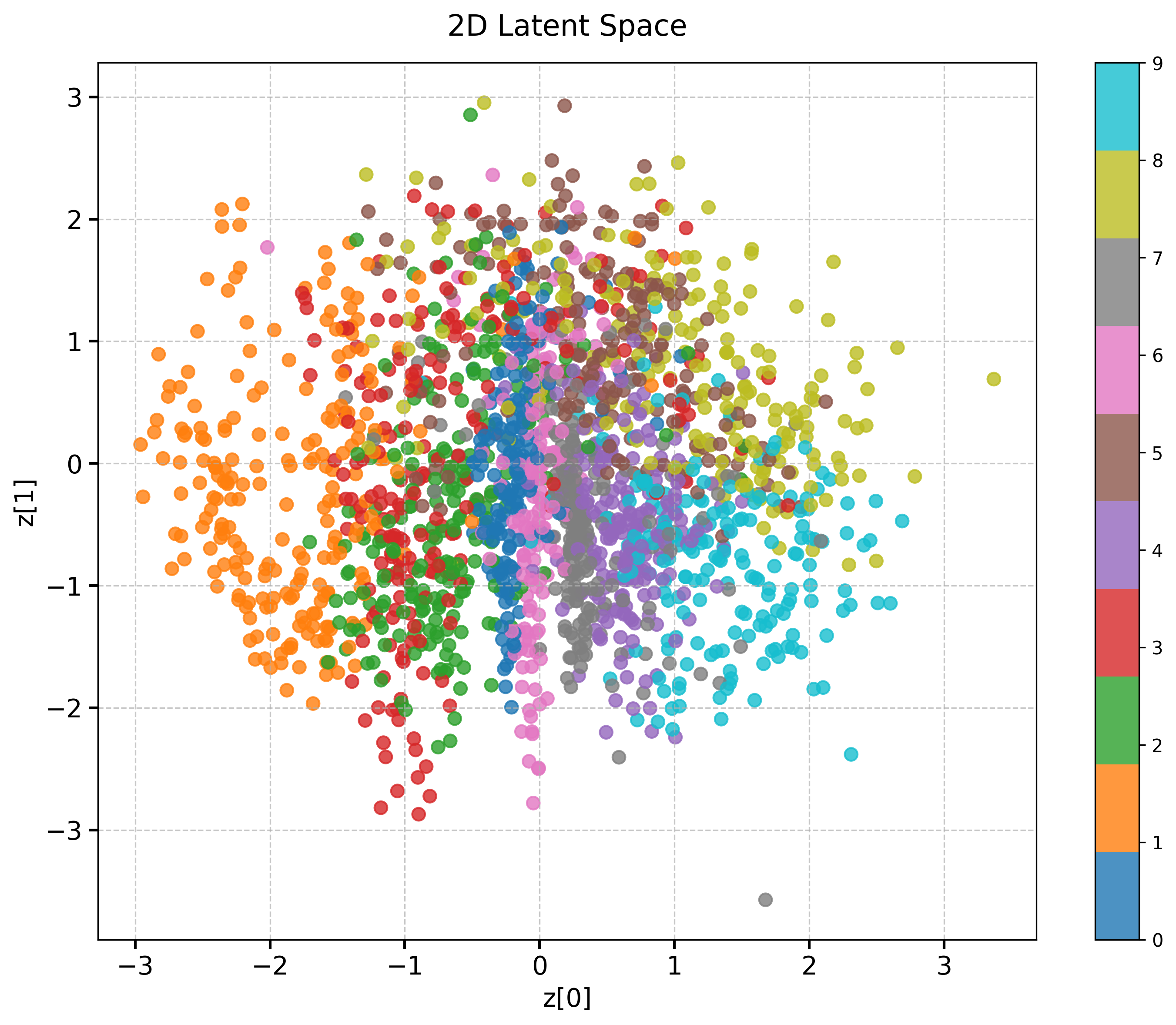}}
\hfill
 \subfloat[With LCD-VAE client training]{\includegraphics[width=0.31\textwidth]{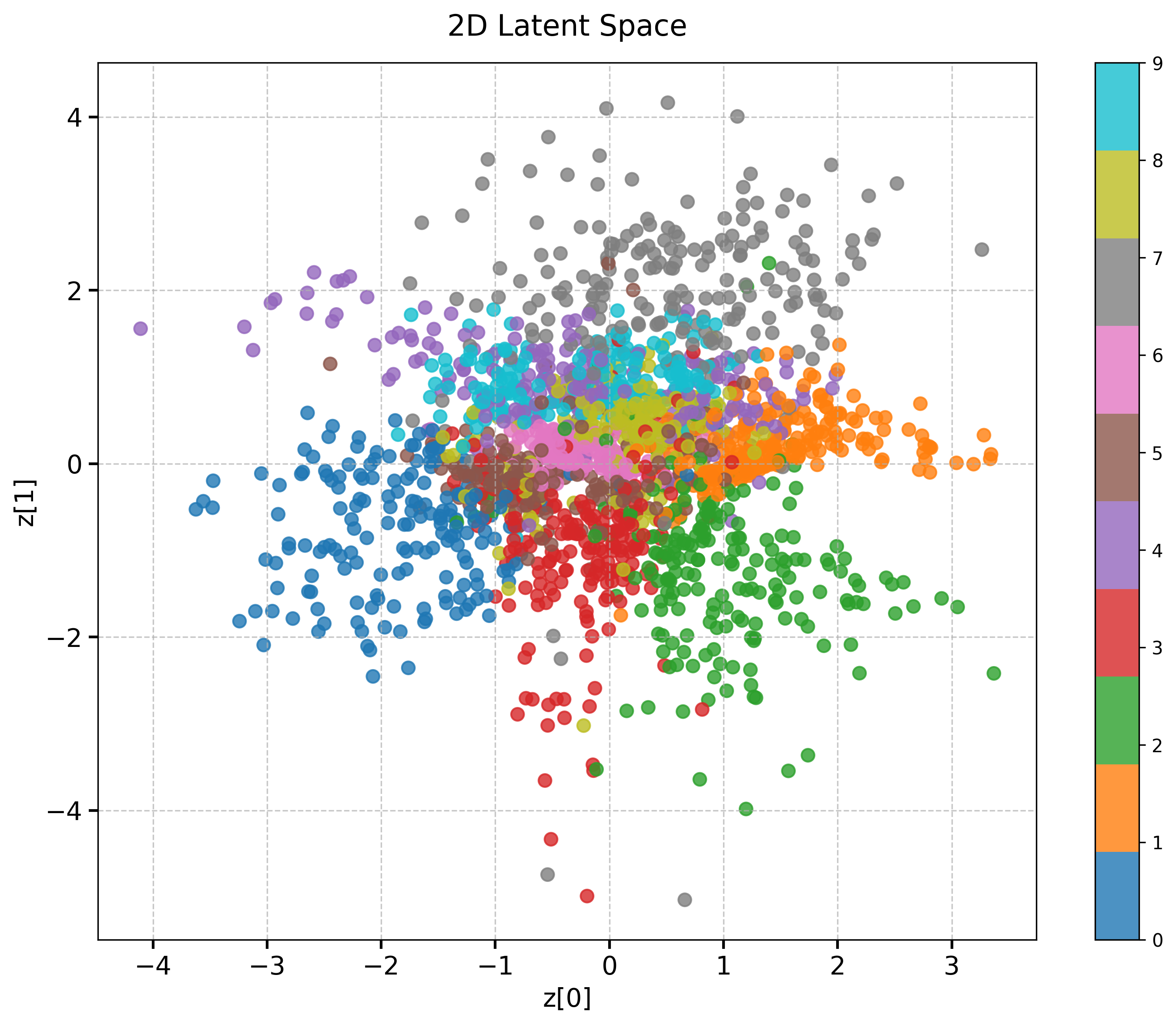}}
  \caption{ALIGN-FL trained global model's latent space representation on MNIST test set. Comparison across different privacy approaches shows varying degrees of cluster separation and organization.}

  \label{fig:ALIGN-FL_latent}
\end{figure*}
\section{Experiments}
We construct a controlled experimental setting to evaluate \ac{FL} under extreme data heterogeneity, establishing a foundation for client-invariant learning as discussed in \cite{li2024fedcir}. This approach aims to develop representations that are robust and meaningful across all clients despite their highly distinct and non-overlapping data distributions. The goal is to capture shared semantic structures even when raw data distributions are completely disjoint. The setup distributes MNIST and Fashion-MNIST datasets across $5$ clients with completely disjoint class distributions (please refer \cref{tab:data-splits}). To facilitate privacy analysis as highlighted in \cref{assumptions}, we introduce cross-domain outliers to Client 1's data - adding Fashion-MNIST samples to its MNIST distribution and vice versa.

Traditional approaches like FedAvg \cite{mcmahan2017communication} and FedProx \cite{li2020federated} encounter fundamental limitations when confronting extreme data heterogeneity scenarios. While FedAvg leverages parameter averaging and FedProx incorporates proximal regularization to harmonize local and global objectives, neither mechanism was designed to handle completely disjoint data distributions. The absence of overlapping support between client distributions creates overwhelming challenges for these methods. More recent approaches such as MOON \cite{li2021model}, PaDPaF\cite{almansoori2022padpaf}, which employ contrastive learning to encourage alignment between global positive representations while distancing them from previous model representations, similarly falter due to this lack of distributional support. This inadequacy is quantitatively evidenced by inferior accuracy, F1 scores, and \ac{FID} metrics \cref{tab:ablation} across these methods. Qualitative analysis in (see \cref{fig:federated_mnist} in Appendix) further illustrates this phenomenon, revealing disorganized global latent spaces where digit clusters significantly overlap, substantially degrading the global decoder's image generation quality. These consistent shortcomings across established methods underscore the critical need for novel approaches specifically engineered to address extreme data heterogeneity in \ac{FL} environments.

\section{Results}
We evaluate ALIGN-FL under three privacy-preserving strategies: baseline without privacy (ALIGN-FL No-\ac{DP}), \ac{DP} with \ac{DP-SGD}, and Lipschitz-constrained decoder \ac{VAE} (\ac{LCD-VAE}). Our experiments simulate the extreme data heterogeneity characteristic of cross-silo scenarios where organizations possess completely distinct data domains. These experiments on MNIST and Fashion-MNIST demonstrate that ALIGN-FL achieves structured latent representations across all variants, as shown in \cref{fig:ALIGN-FL_latent}. The baseline achieves the most structured representations, followed by \ac{LCD-VAE} and \ac{DP-SGD} respectively.

For quantitative evaluation, we attach a multi-class logistic regression layer to the global encoder trained using ALIGN-FL. The classifier and frozen encoder pipeline (architectural details in \cref{imp_details}) is trained on 80\% of the stratified testset and evaluated on the remaining 20\%. The performance metrics in \cref{tab:align-fl-performance} align with the qualitative observations, showing highest accuracy and F1-scores for the baseline, followed by LCD-VAE and DP-SGD. Notably, all variants significantly outperform random guessing (10\% for 10-class classification), indicating that meaningful representations are preserved despite privacy constraints. Despite the challenges of \emph{model collapse} often associated with synthetic data generation, these results indicate our approach maintains \emph{good discriminative power} while balancing privacy constraints.

The image generation quality follows a similar pattern, with \ac{LCD-VAE} producing sharper and more coherent samples than \ac{DP-SGD}. This visual assessment is validated by \ac{FID} scores in \cref{tab:align-fl-performance}, where \ac{LCD-VAE} achieves better scores by preserving local structure through Lipschitz constraints. Detailed visual comparisons of generated samples are provided in Appendix (\cref{fig:gen_quality_fmnist,fig:gen_quality_mnist}).

Analysis of image reconstruction dynamics in \cref{fig:privacy_mapping} reveals how both \ac{DP-SGD} variants and \ac{LCD-VAE} handle sensitive outlier points. Both mechanisms map outliers to typical points in the dataset, demonstrating their privacy-preserving capabilities—a behavior theoretically predicted by \cref{lemma:1}, which establishes that standard \acp{VAE} create problematic encoder mappings with minimal overlap between inputs.. When compared directly, \ac{LCD-VAE} tends to produce higher quality outputs with more semantically appropriate mappings to typical data points, consistent with Theorem A.6's prediction that structural constraints naturally reduce model sensitivity without requiring explicit noise injection. This suggests \ac{LCD-VAE} achieves a better privacy-utility trade-off, although establishing a formal connection between Lipschitz constraints and \ac{DP} guarantees remains an open theoretical challenge. In contrast, our baseline experiments with the No-\ac{DP} variant show it both reconstructs and generates outlier points during sampling, clearly indicating its lack of privacy protection (examples provided in Appendix \cref{fig:no_dp_base}).
\begin{figure*}[!htb]
  \centering
  \footnotesize
  \subfloat[Ground truth image]{\includegraphics[width=0.31\textwidth]{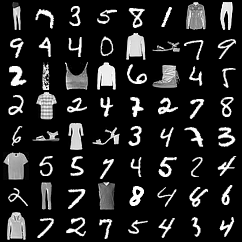}}
\hfill
  \footnotesize
  \subfloat[With DP-SGD $(\epsilon=10, \delta=10^{-5})$ client training]{\includegraphics[width=0.31\textwidth]{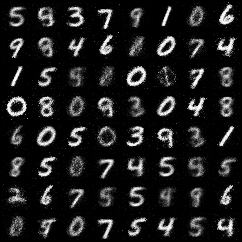}}
\hfill
 \subfloat[With LCD-VAE client training]{\includegraphics[width=0.31\textwidth]{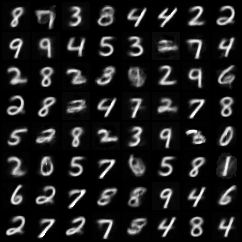}}
  \caption{Privacy-preserving outlier mapping in ALIGN-FL reconstruction. (a) Ground truth shows clear outliers (Fashion-MNIST samples). (b) DP-SGD maps outliers to semantically similar MNIST digits but with noise artifacts. (c) LCD-VAE produces cleaner mappings of outliers to typical data points, demonstrating how Lipschitz constraints naturally preserve semantic structure while protecting sensitive information.}
  \label{fig:privacy_mapping}
\end{figure*}
\vspace{-5pt}

\subsection{Ablation Study}
To understand the contribution of different components in ALIGN-FL, we conducted a series of ablation experiments by systematically evaluating architectural variations of our framework. We examined four distinct configurations to isolate the effects of different learning mechanisms:

\begin{itemize}
    \item \textbf{MOON-Base}: Standard MOON algorithm implementing contrastive loss on clients with weight averaging aggregation on the server.
    \item \textbf{ALIGN-FL}: Our proposed approach without differential privacy measures, employing client training without contrastive loss and server model training on synthetic data without weight averaging.
   \item \textbf{ContrastAvgSynth}: Client-side contrastive loss (MOON-style) combined with both weight averaging and synthetic data training on the server.
    \item \textbf{ContrastSynthOnly}: Client training with contrastive loss (MOON-style) and server-side synthetic data training only (no weight averaging).
\end{itemize}

\cref{tab:ablation} presents the comparative performance across these configurations. The results demonstrate that ALIGN-FL achieves the best performance across all metrics (FID: 81.91, Accuracy: 74.25\%, F1-Score: 74.06\%), while traditional federated learning approaches and MOON-Base show significantly inferior results.

This performance hierarchy reveals several critical insights: First, parameter averaging fundamentally undermines model quality in extremely heterogeneous settings. When clients possess entirely non-overlapping data distributions, the global model's knowledge acquired through synthetic data training is effectively diminished through averaging. Second, the comparison between ContrastAvgSynth and ContrastSynthOnly presents an interesting trade-off - while ContrastSynthOnly achieves better image quality (FID: 89.64 vs 99.68), ContrastAvgSynth demonstrates superior classification performance (Accuracy: 62.75\% vs 52.5\%, F1-Score: 59.05\% vs 50.66\%), likely due to its server-side synthetic data training after averaging. Despite this improvement, both contrastive approaches still underperform compared to standard ALIGN-FL, indicating that contrastive loss at clients is less effective than ALIGN-FL's approach when dealing with extreme heterogeneity.

The superior performance of configurations that prioritize synthetic data training without averaging (ALIGN-FL being the prime example) empirically validates our approach's core premise - that generative component sharing and server-side training on synthetic data provide a more effective mechanism for knowledge transfer across disjoint data distributions than traditional parameter averaging techniques or client-side contrastive learning methods.
\section{Discussion}
Our implementation of ALIGN-FL demonstrates several key insights into privacy-preserving \ac{FL} for non-overlapping distributions. The theoretical foundations in \cref{lemma:1}--\cref{lemma:federated_lipschitz_generation} establish the privacy guarantees of our approach, while empirical analysis reveals why conventional methods fail under extreme heterogeneity and how our selective generative component sharing addresses these limitations.

\subsection{Privacy Mechanisms and Theoretical Insights}
ALIGN-FL introduces two distinct privacy-preserving mechanisms, each with different theoretical foundations and practical implications. \ac{DP-SGD} provides formal $(\epsilon,\delta)$-\ac{DP} guarantees through bounded perturbations, while our Lipschitz-constrained \ac{VAE} approach achieves privacy through structural limitations on the decoder's capacity. As shown in \cref{lemma:1}, standard \acp{VAE} naturally create encoder mappings with minimal overlap between inputs, which can lead to privacy vulnerabilities when shared directly. Our \ac{LCD-VAE} approach addresses this through gradient penalties that limit the decoder's sensitivity to individual training examples.

Recent work \cite{gross2023differentially} analyzed privacy for Lipschitz \acp{VAE} using Bayesian posterior sampling for single-shot generation, while \ac{FL} requires composition analysis across multiple rounds.

The privacy transformation visualized in \cref{fig:privacy_mapping} empirically validates \cref{lemma:lipschitz}, which establishes that imposing constraints on model outputs naturally reduces sensitivity to individual examples. Both mechanisms effectively map outliers to semantically similar typical samples, preserving essential structure while protecting sensitive details.

\subsection{Limitations and Enhancement Possibilities}
Despite ALIGN-FL's strong performance, several limitations should be acknowledged. First, the quality gap between privacy-preserving and non-private synthetic data widens significantly for more complex data domains beyond our experimental setup. This limitation could be addressed through domain-specific architectural enhancements like convolutional VAEs or more advanced generative models.

Second, while our empirical results demonstrate \ac{LCD-VAE}'s privacy-preserving properties, establishing formal $(\epsilon,\delta)$-\ac{DP} guarantees for Lipschitz-constrained models remains an open challenge. Future work could develop tighter theoretical connections between Lipschitz constants and \ac{DP} parameters.

Third, our approach introduces additional computational burden on the server for synthetic data generation and training, which may become significant with very large numbers of clients. Hierarchical aggregation strategies could help address this scalability challenge while maintaining privacy guarantees.

\subsection{Practical Deployment Considerations}
For real-world cross-silo deployments, ALIGN-FL offers several practical advantages. The architecture independence enables organizations to maintain specialized model designs optimized for their specific data modalities and computational resources. This flexibility is particularly valuable in cross-domain scenarios like healthcare collaborations across specialties, financial institutions with diverse data types, and various supply chains \cite{10338891}.

For domains with highly complex data distributions, client-side pretraining with dedicated privacy budgets offers a promising enhancement. Clients could initially develop robust local representations before engaging in federated updates, reducing required communication rounds while preserving privacy. This staged approach may be particularly valuable for specialized domains where domain knowledge can inform initial model architectures.

Our stateful client design allows organizations to maintain their local progress throughout the federated process, enabling continuous improvement without the knowledge regression that occurs in traditional parameter averaging approaches when faced with non-overlapping distributions.

\section{Conclusion}
ALIGN-FL introduces a novel framework for privacy-preserving \ac{FL} across non-overlapping data distributions through selective sharing of generative components. Our approach significantly outperforms traditional \ac{FL} methods in extreme heterogeneity scenarios, achieving substantially higher accuracy, F1-scores, and generation quality while effectively preserving privacy.

By decoupling architectural choices between participants and focusing on generative capabilities rather than full parameter sharing, our framework enables knowledge transfer in settings where conventional approaches fundamentally break down. The experimental results demonstrate that both privacy mechanisms - \ac{DP-SGD} and \ac{LCD-VAE} - effectively balance utility and privacy, with \ac{LCD-VAE} often achieving better utility at comparable privacy levels.

The framework's versatility extends beyond our experimental setup, with potential applications in cross-organizational collaborations where data sharing is restricted but knowledge sharing is valuable. Examples include medical institutions collaborating across specialties, manufacturers sharing quality control knowledge across different product lines, or financial institutions detecting fraud patterns across diverse financial instruments.

Future research directions include extending ALIGN-FL to more complex domains through advanced generative architectures, establishing formal privacy bounds for Lipschitz-constrained models, and developing adaptive mechanisms for privacy budget allocation across heterogeneous clients. By providing a principled approach to \ac{FL} that addresses the fundamental challenges of heterogeneity, privacy, and communication efficiency, ALIGN-FL opens new possibilities for privacy-preserving collaborative learning in real-world environments characterized by extreme data heterogeneity. While ALIGN-FL demonstrates strong empirical performance, formal convergence guarantees remain to be established and are a valuable direction for future work.\vspace{-6pt}
\section*{Acknowledgments}
Mayank Gulati, Benedikt Groß, and Gerhard Wunder were supported by the Federal Ministry of Education and Research of Germany (BMBF) within “6G-RIC: 6G Research and Innovation Cluster,” under project identification number 16KISK025. Gerhard Wunder is also supported by the BMBF joint project “UltraSec: Security Architecture for UWB-based Application Platform,” project identification number 16KIS1682, and by the German Science Foundation (DFG) within priority program SPP 2378: “ResNets: Resilience in Connected Worlds” under grant WU 598/12-1.

\vspace{-6pt}
\bibliographystyle{ieeetr}
\bibliography{references}
\appendix
\section{Appendix}
\subsection{Source code} \url{https://github.com/MakGulati/ALIGN-FL}.
\vspace{-2pt}
\subsection{Model Architecture \& Training Details}
\label{imp_details}
The \ac{VAE} uses symmetric architecture with encoder (512$\rightarrow$256$\rightarrow$128) mapping to 2D latent space ($\mu$, $\log \sigma^2$) and decoder (128$\rightarrow$256$\rightarrow$512), both using \ac{ReLU} activations and sigmoid output. All models use Adam optimizer ($\text{lr}=0.001$), $10$ federated rounds, batch size $128$, $10$ local epochs, with $5\%$ outlier samples.

\textbf{ALIGN-FL Configuration:} \ac{LCD-VAE} uses $\lambda_{\text{GP}}=1$ and $\lambda_{\text{KL}}=1$ based on \cref{fig:dp-lcd-utility}. Server training: $10$ epochs, batch=128, $5000$ global samples. \ac{DP-SGD}: $\varepsilon=10$, $\delta=10^{-5}$ based on \cref{fig:dp-sdg-utility}.

\textbf{Baselines:} FedAvg uses identical parameters. FedProx adds $\mu=1$. MOON uses $\text{temperature}=0.5$, $\mu=2$, with $5$ previous checkpoints as negatives.

\textbf{Evaluation:} Classification uses 80\% test data for training logistic regression head, 20\% for evaluation. Metrics: accuracy, F1-score, \ac{FID}.

\subsection{Implementation}
The experiments used the Flower framework \cite{beutel2020flower} with Weights \& Biases \cite{wandb} tracking. The privacy accounting follows \cite{gopi2021numerical}.

\begin{figure}[htp!]
    \centering
    \resizebox{0.51\textwidth}{!}{%
        \includegraphics{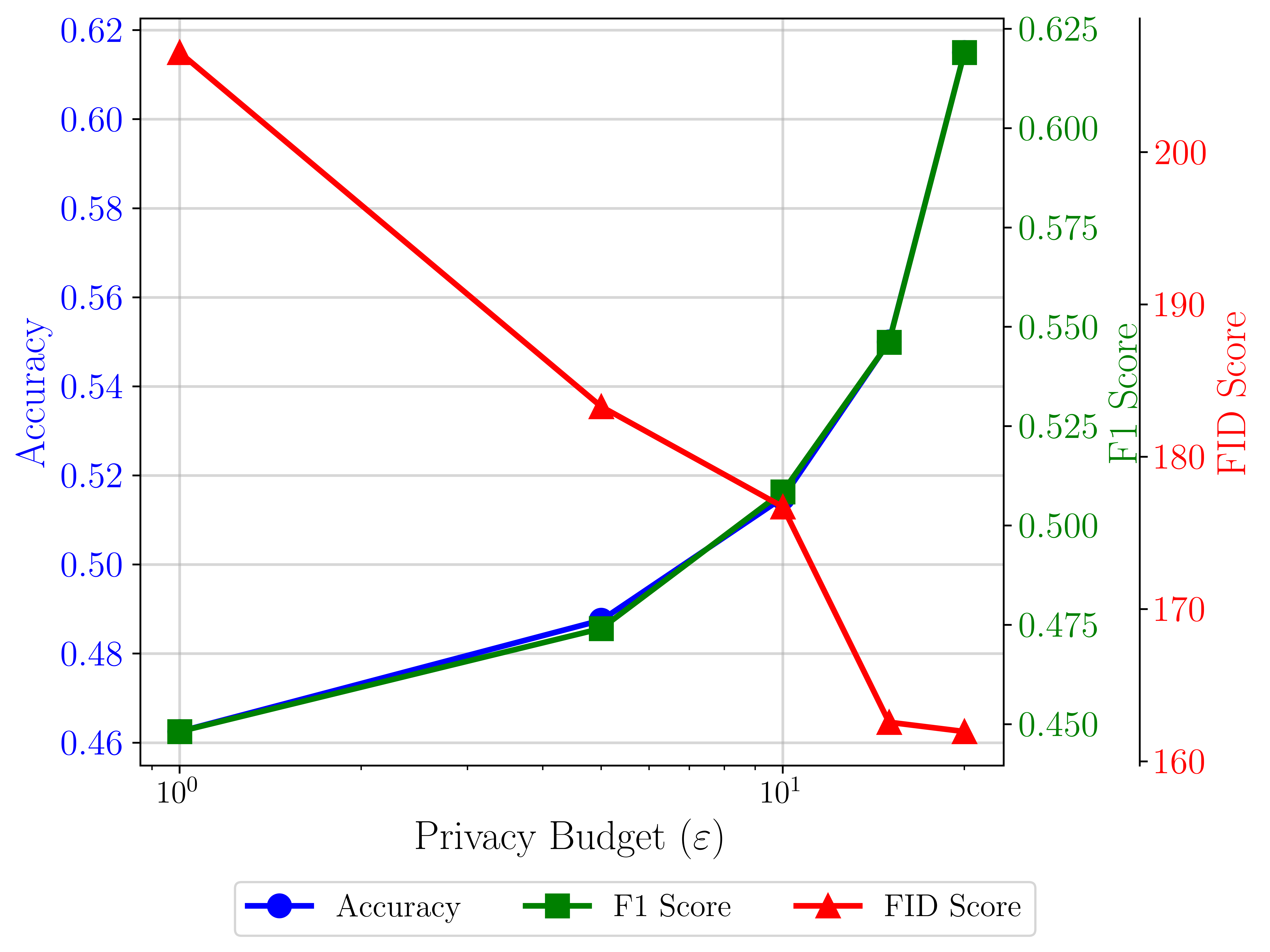}%
    }
    \caption{Privacy-Utility Tradeoff for different metrics in \ac{DP-SGD} for MNIST.}
    \label{fig:dp-sdg-utility}
\end{figure}
\begin{figure}[htp!]
    \centering
    \resizebox{0.5\textwidth}{!}{%
    \includegraphics[]{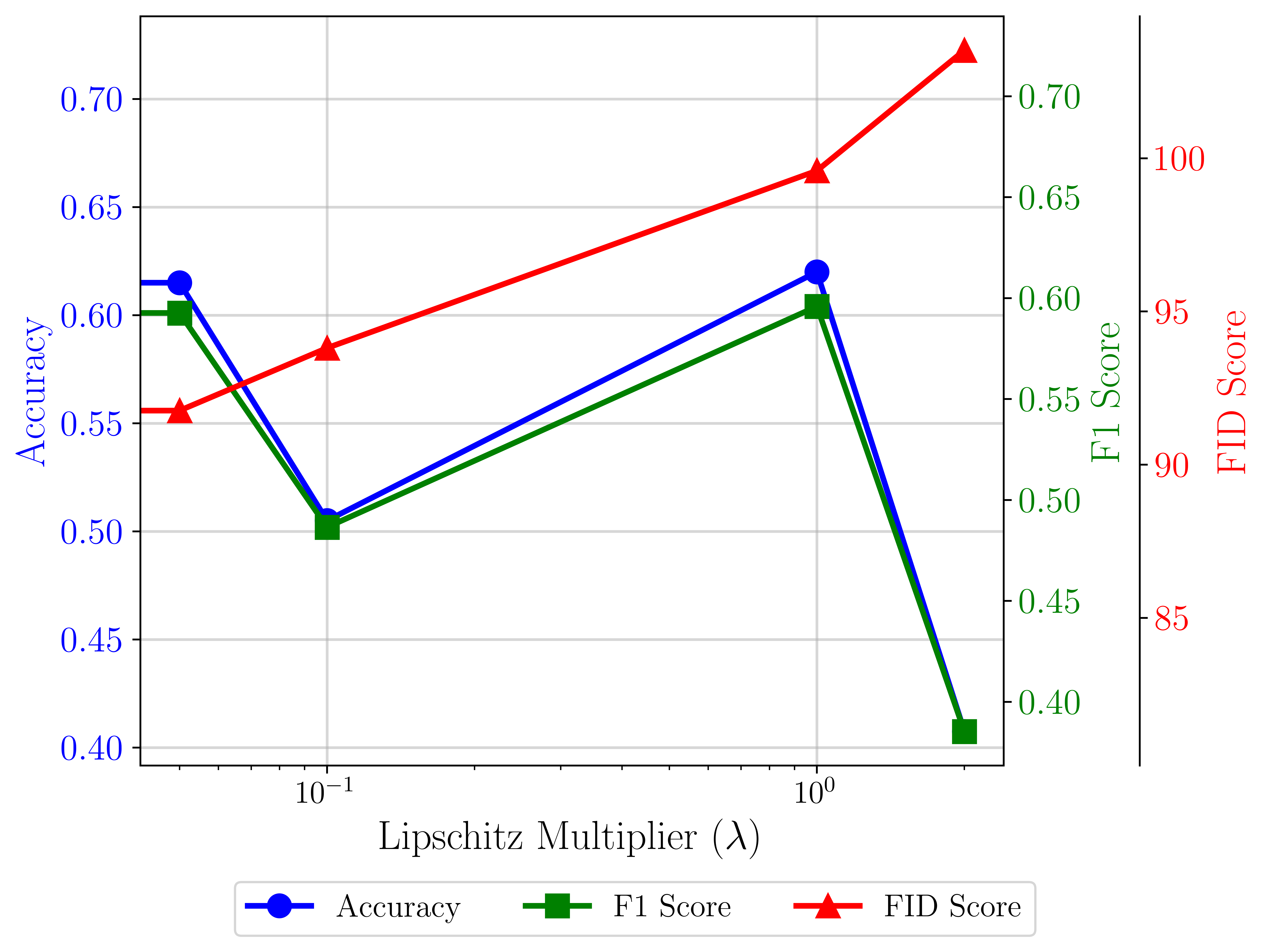}
    }
    \caption{Privacy-Utility Tradeoff for different metrics in \ac{LCD-VAE} for MNIST.}
    \label{fig:dp-lcd-utility}
\end{figure}

\begin{figure}[htp!]
    \centering
    \includegraphics[scale=1.5, width=.96\columnwidth]{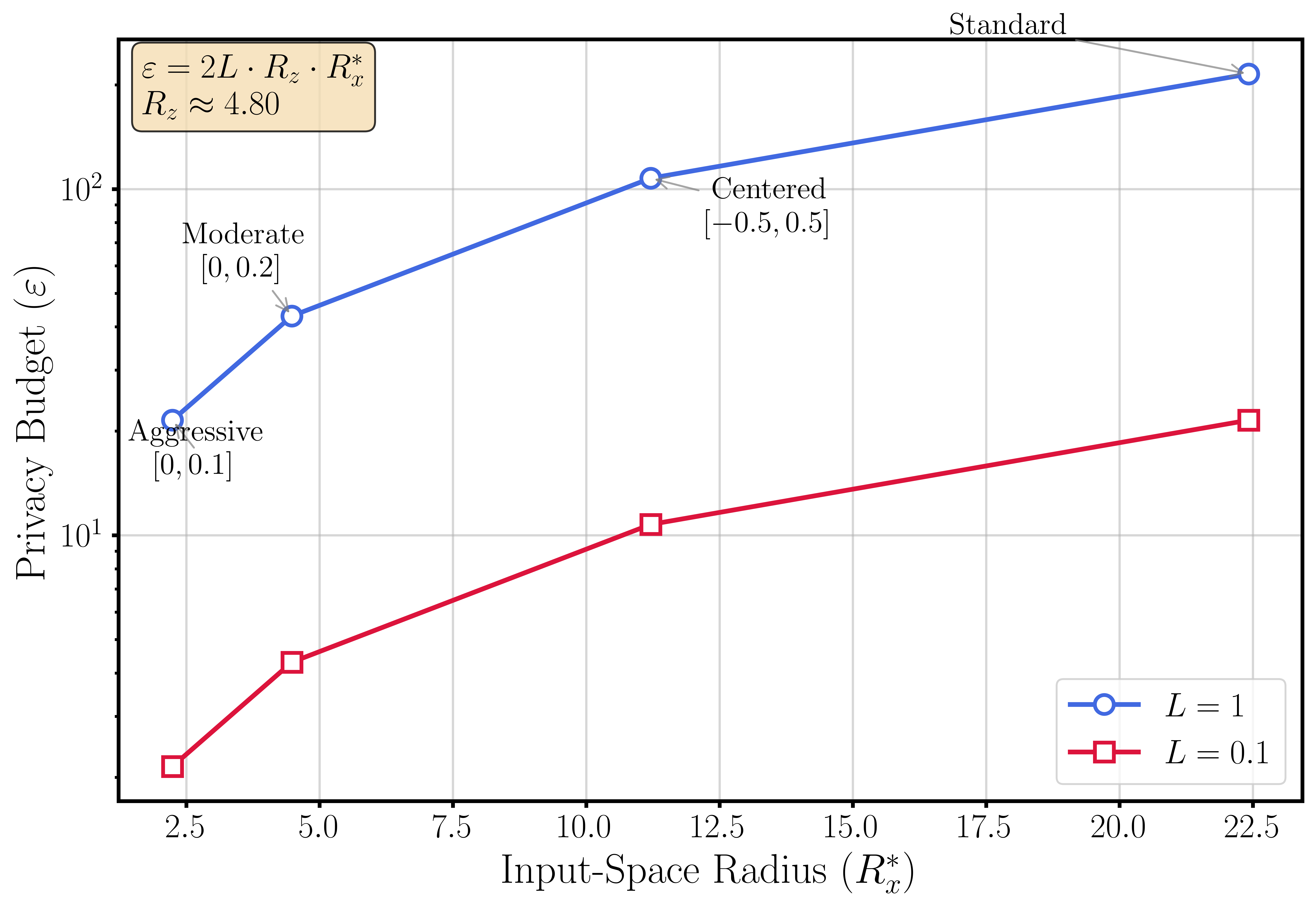}
    \caption{Privacy bound vs Input Space Radius with $\delta=10^{-5}$.}
    \label{fig:dp-vae-bound}
\end{figure}

\begin{figure}[!htb]
\vspace{-3pt}
\centering
\footnotesize
\setlength{\abovecaptionskip}{0pt}
\setlength{\belowcaptionskip}{-2pt}
\subfloat[FedAvg: Latent space]{\includegraphics[width=0.49\columnwidth]{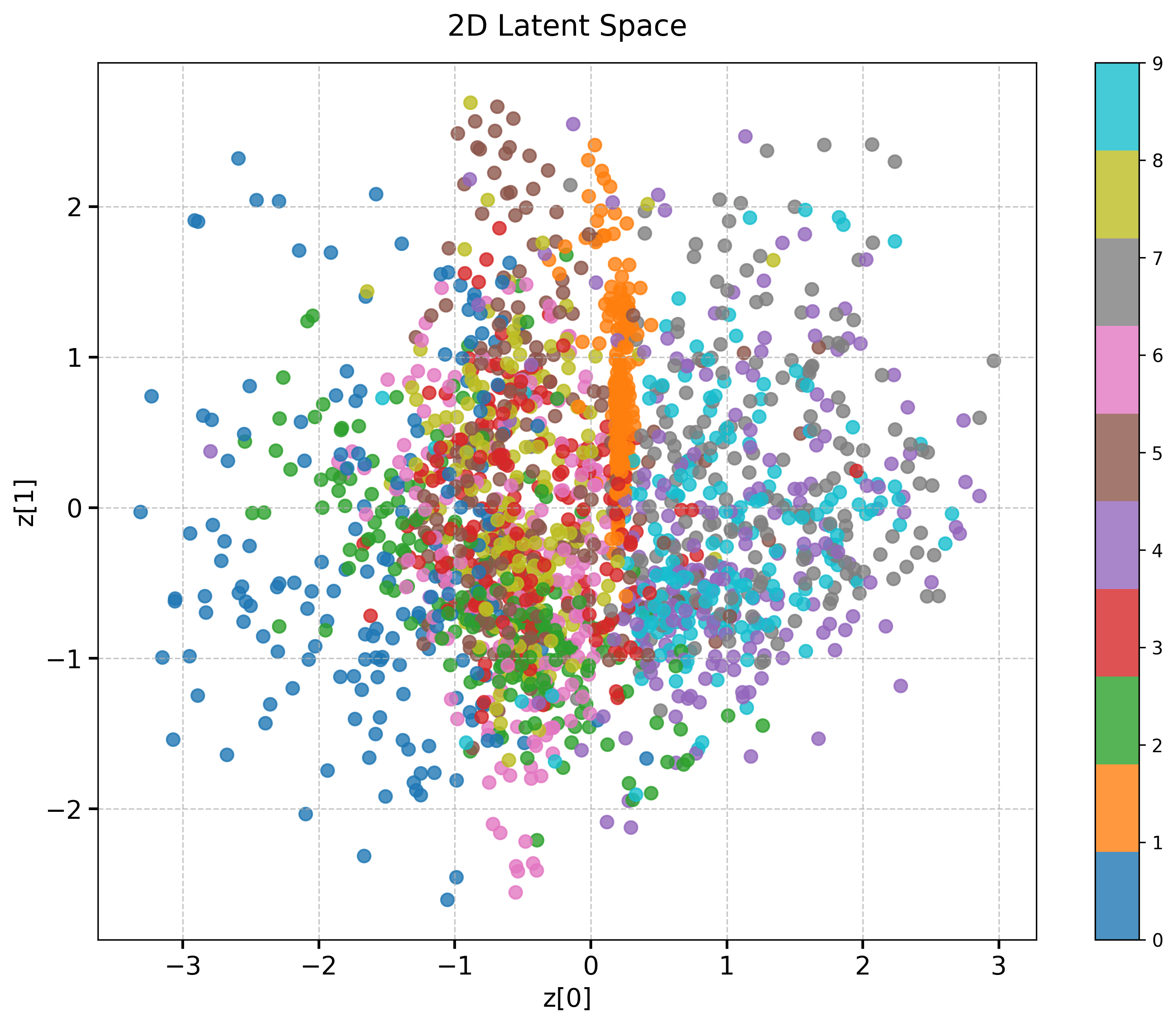}}%
\hfill%
\subfloat[FedAvg: Samples]{\includegraphics[width=0.42\columnwidth]{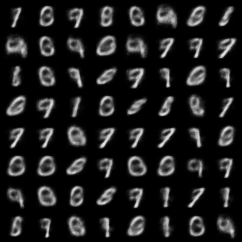}}\\%
\vspace{-2pt}%
\subfloat[FedProx: Latent space]{\includegraphics[width=0.49\columnwidth]{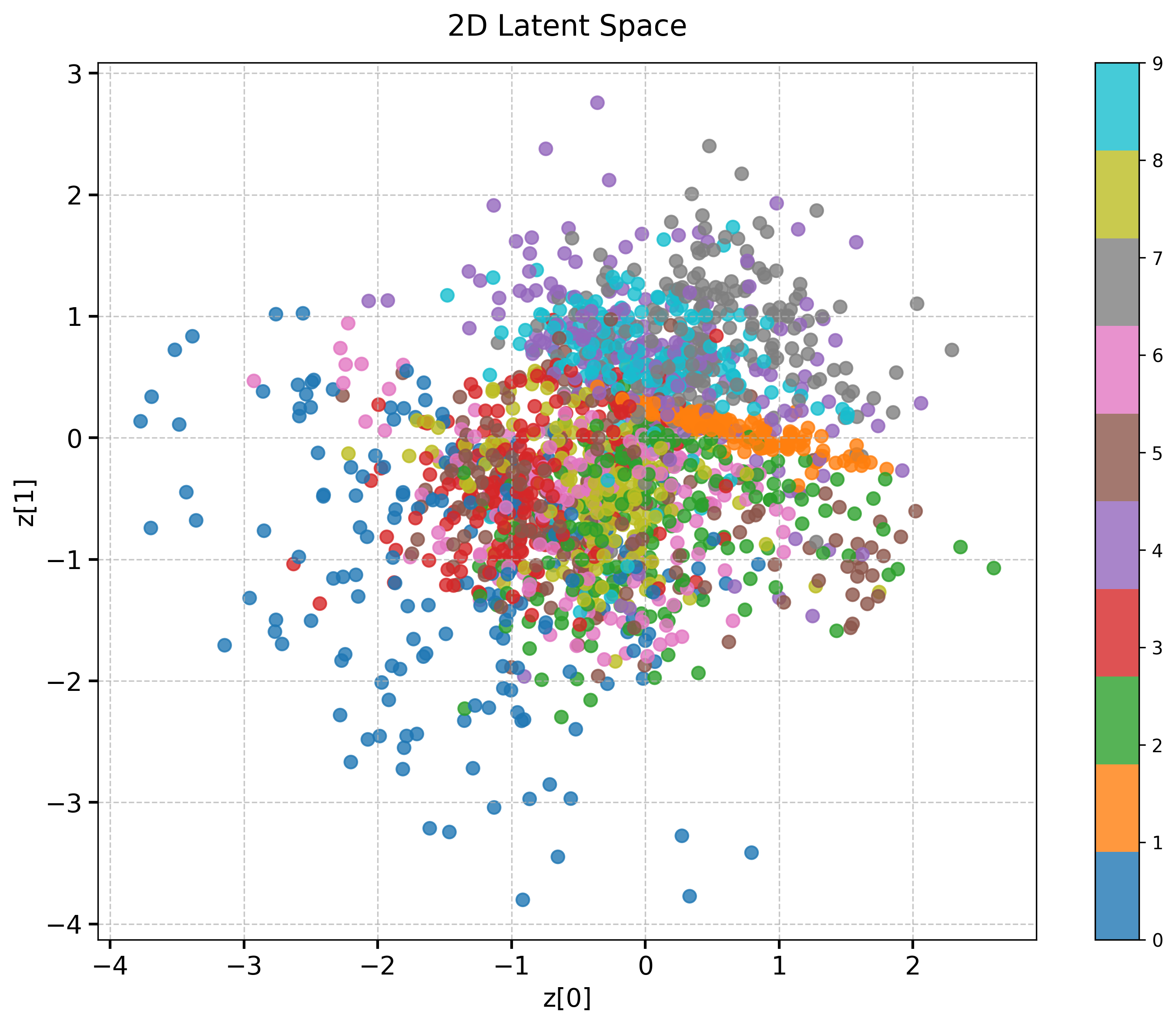}}%
\hfill%
\subfloat[FedProx: Samples]{\includegraphics[width=0.42\columnwidth]{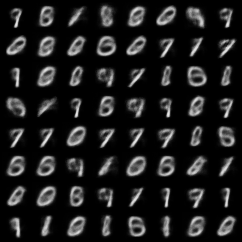}}%
\caption{Baseline results on MNIST. (a-b) FedAvg, (c-d) FedProx ($\mu=1$).}
\label{fig:federated_mnist}
\vspace{-4pt}
\end{figure}
\vspace{-5pt}
\subsection{Theoretical foundations}
\label{privacy_intuition}
\ac{DP} guarantees that  model outputs cannot reliably reveal individual training examples.

\begin{definition}[$(\epsilon, \delta)$-\ac{DP}] 
    A randomized mechanism $\mathcal{M}$ satisfies ($\epsilon$, $\delta$)-\ac{DP} if for all neighboring datasets $D$, $D'$ differing by one example and all possible outputs $S$:
\begin{equation}
P(\mathcal{M}(D) \in S) \leq e^\epsilon P(\mathcal{M}(D') \in S) + \delta.
\end{equation}
\end{definition}

We review two common \ac{DP} mechanisms. Let $D\in \{0,1\}^{|\mathcal{X}|}$ be the database encoding presence/absence of elements. Given query $f: \{0,1\}^{|\mathcal{X}|}\rightarrow \mathbb{R}^k$ and $\Delta f := \max_{D,D':\|D-D'\| \leq 1}\|f(D)-f(D')\|$, the Laplacian mechanism $\mathcal{M}_L(D,f,\epsilon) := f(D) + (Y_1,\cdots, Y_k)$ with $Y_i \sim \text{Lap}(\Delta f/\epsilon)$ provides $(\epsilon,0)$-\ac{DP}. The Gaussian mechanism $\mathcal{M}_G(D,f,\sigma) := f(D) + (Y_1,\cdots, Y_k)$ with $Y_i \sim \mathcal{N}(0, \sigma^2)$ satisfies the following:

\begin{theorem} \cite[Theorem A.1]{dwork2014algorithmic}
     For $\epsilon\in (0,1)$ and $c^2 > 2\ln (1.25/\delta)$, $\mathcal{M}_G(D,f,\sigma)$ is $(\epsilon, \delta)$-\ac{DP} if $\sigma \geq c\Delta f/\epsilon$.
\end{theorem}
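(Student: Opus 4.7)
The plan is to reduce to a scalar problem via rotational invariance of the isotropic Gaussian noise, analyze the privacy loss random variable directly, and then apply a Gaussian tail estimate. First I would fix neighboring datasets $D, D'$, set $v := f(D) - f(D')$ with $\|v\|_2 \leq \Delta f$, and observe that the noise distribution $\mathcal{N}(0, \sigma^2 I)$ is invariant under rotations. Aligning $v$ with a coordinate axis reduces the problem to comparing two one-dimensional output distributions, $\mathcal{N}(0, \sigma^2)$ and $\mathcal{N}(\|v\|_2, \sigma^2)$, which differ only by a mean shift of magnitude at most $\Delta f$.

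Second, I would write down the log-density ratio (the privacy loss) between these two Gaussians explicitly. Because the Gaussian log-density is quadratic, the ratio is an affine function of the output; hence, under $M(D)$ the privacy loss $L$ is itself Gaussian with mean $\|v\|_2^2/(2\sigma^2)$ and variance $\|v\|_2^2/\sigma^2$. The target inequality $\mathbb{P}[M(D) \in S] \leq e^\epsilon\, \mathbb{P}[M(D') \in S] + \delta$ then reduces, by the standard splitting of $S$ into a "good" region where the density ratio is pointwise at most $e^\epsilon$ and a "bad" tail, to the single requirement $\mathbb{P}[L > \epsilon] \leq \delta$.

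Third, I would apply the Gaussian tail bound $\mathbb{P}[Z > t] \leq e^{-t^2/2}$ for $Z \sim \mathcal{N}(0,1)$ to the standardized privacy loss, which gives
\[
\mathbb{P}[L > \epsilon] \leq \exp\!\left(-\tfrac{1}{2}\bigl(\epsilon \sigma/\|v\|_2 - \|v\|_2/(2\sigma)\bigr)^2\right).
\]
Substituting $\|v\|_2 \leq \Delta f$ and $\sigma = c\Delta f/\epsilon$ and using $\epsilon \in (0,1)$ to absorb the subtracted quadratic correction term $\|v\|_2/(2\sigma) = \epsilon/(2c)$, the exponent is controlled whenever $c$ satisfies a quadratic inequality; solving this inequality and bounding constants yields precisely the sufficient condition $c^2 > 2\ln(1.25/\delta)$.

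The main obstacle is the constant $1.25$. A naive tail bound only gives the weaker condition $c^2 \geq 2\ln(1/\delta)$; sharpening this to $2\ln(1.25/\delta)$ requires carefully tracking the quadratic correction $\Delta f^2/(2\sigma^2)$ that appears in the mean of $L$, and using the restriction $\epsilon \in (0,1)$ to bound this correction uniformly. Once that bookkeeping is in place, the rest of the argument is routine manipulation of Gaussian densities and the standard conversion from privacy loss tail estimates into the $(\epsilon,\delta)$-DP guarantee.
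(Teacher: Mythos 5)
The paper does not prove this statement at all; it is quoted verbatim as Theorem A.1 of the cited reference \cite{dwork2014algorithmic}, so the only meaningful comparison is with the canonical proof there. Your outline reproduces that proof's structure faithfully: rotational reduction to one dimension, the observation that the privacy loss is an affine (hence Gaussian) function of the output with mean $\|v\|_2^2/(2\sigma^2)$ and variance $\|v\|_2^2/\sigma^2$, the standard good-set/bad-set splitting that reduces $(\epsilon,\delta)$-DP to the tail condition $\mathbb{P}[L>\epsilon]\le\delta$, and the monotonicity in $\|v\|_2$ that lets you take $\|v\|_2=\Delta f$ as the worst case. All of this is correct.

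The one quantitative weak point is your third step. The tail bound $\mathbb{P}[Z>t]\le e^{-t^2/2}$ applied at $t=c-\epsilon/(2c)$ gives $(c-\epsilon/(2c))^2\ge c^2-\epsilon\ge c^2-1$, hence only the sufficient condition $c^2\ge 1+2\ln(1/\delta)=2\ln(e^{1/2}/\delta)\approx 2\ln(1.65/\delta)$, which is strictly weaker than the stated $c^2>2\ln(1.25/\delta)$. To recover the constant $1.25$ you need the sharper Mills-ratio bound $\mathbb{P}[Z>t]\le \frac{1}{\sqrt{2\pi}\,t}e^{-t^2/2}$ together with the short case analysis on $\delta$ carried out in Appendix A of \cite{dwork2014algorithmic}; the crude exponential tail alone cannot absorb the $\epsilon/(2c)$ correction that cheaply. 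This is a bookkeeping issue rather than a conceptual one --- your argument as written still proves the mechanism is $(\epsilon,\delta)$-DP for a marginally larger $c$ --- but as stated it does not yet establish the precise constant in the theorem.
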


Similar results for \ac{DP-SGD} are in \cite{abadi2016deep}. For Lipschitz constraints in \ac{VAE} training, models using \cref{eq:loss} follow:

\begin{lemma}\label{lemma:1}
    For nearly optimal $\phi$ and $x\neq x'$, there exists small $\epsilon>0$ such that 
    \[
        \mathbb{E}_{p^*(x)}\tv(q_{\phi}(z|x),q_{\phi}(z|x')) \geq 1-\epsilon.
    \]
\end{lemma}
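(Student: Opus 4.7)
The plan is to exploit the two-term structure of the ELBO in \cref{eq:loss}: near-optimality of $\phi$ creates a tension between reconstruction fidelity (which forces latent codes to distinguish different inputs) and the KL regularizer (which pulls all posteriors toward a common prior). On the reconstruction side, the constraint is strong enough to make $q_\phi(z|x)$ and $q_\phi(z|x')$ live on essentially disjoint regions of the latent space, which is exactly what the TV lower bound asserts.

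First, I would translate near-optimality into a quantitative reconstruction guarantee. Under the Gaussian decoder assumption, $\log p_\theta(x|z) \propto -\|x - f_\theta(z)\|^2$, so for $\phi$ within $\eta$ of the optimum, $\mathbb{E}_{p^*(x)} \mathbb{E}_{q_\phi(z|x)}\|x - f_\theta(z)\|^2$ is small. A Markov argument then yields a latent set $A_x := \{z : \|f_\theta(z) - x\| < \delta\}$ with $q_\phi(A_x|x) \geq 1 - \epsilon'$ for typical $x$, where $\epsilon'$ and $\delta$ can be driven to zero as $\eta \to 0$. Second, for any $x'$ with $\|x - x'\| > 2\delta$, the triangle inequality forces $A_x \cap A_{x'} = \emptyset$, and the variational characterization of total variation gives
\begin{equation*}
\tv(q_\phi(z|x), q_\phi(z|x')) \geq q_\phi(A_x|x) - q_\phi(A_x|x') \geq 1 - 2\epsilon'.
\end{equation*}
Taking expectation over $x \sim p^*$ and absorbing the exceptional set $\{x : \|x - x'\| \leq 2\delta\}$ into the error term yields the claim with $\epsilon := 2\epsilon' + o(1)$.

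The main obstacle I expect is Step 1, namely upgrading the \emph{average} reconstruction guarantee implied by near-optimality of the ELBO to a statement that holds with high probability over $p^*$ uniformly in $x$, while simultaneously tracking $\eta, \delta, \epsilon'$ in a consistent way; an additional subtlety is that the KL term discourages $\sigma_\phi(x) \to 0$, so the concentration of $q_\phi(\cdot|x)$ on $A_x$ must be wrung from the reconstruction term alone. A secondary difficulty is the treatment of pairs $(x, x')$ that are genuinely close in input space: the disjointness argument only bites when $\|x - x'\|$ exceeds the decoder's reconstruction radius, so I would need mild regularity of $p^*$ (absolute continuity, or at least no atom at $x'$) to ensure the residual mass of the close-pair set vanishes and can be folded into the final $\epsilon$.
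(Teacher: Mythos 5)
Your route is genuinely different from the paper's: the paper never touches the reconstruction term. It observes that for a deterministic decoder the \emph{true} posteriors $p_{\theta}(z|x)$ and $p_{\theta}(z|x')$ are mutually singular ($\tv=1$), applies the triangle inequality for $\tv$ twice to reduce the claim to bounding $\tv(q_{\phi}(z|x),p_{\theta}(z|x))$, and then uses the fact that the \ac{ELBO} gap in \cref{eq:elbo} is exactly $D_{\kl}(q_{\phi}(z|x)\,\|\,p_{\theta}(z|x))$, so ``nearly optimal $\phi$'' means the variational posterior is $\tv$-close to the true one. Your argument instead works through concentration of $q_{\phi}(\cdot|x)$ on the preimage set $A_x=\{z:\|f_{\theta}(z)-x\|<\delta\}$ and disjointness of $A_x$ and $A_{x'}$; steps 2--4 of that plan (disjointness via the triangle inequality in data space, the $\tv$ lower bound $\tv(P,Q)\geq P(A_x)-Q(A_x)$, and absorbing the close-pair set under a no-atoms assumption on $p^*$) are sound.

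The genuine gap is your Step 1, and it is not merely the uniformity issue you flag. Near-optimality of the \ac{ELBO} does \emph{not} imply that $\mathbb{E}_{p^*}\mathbb{E}_{q_{\phi}}\|x-f_{\theta}(z)\|^2$ is small: the optimum of \cref{eq:loss} balances reconstruction against the \ac{KL} penalty, and the achievable reconstruction error at that optimum is bounded below by the irreducible distortion of the model class (with the paper's 2-dimensional latent space this floor is far from zero). Consequently $\delta$ cannot be driven to $0$, $q_{\phi}(A_x|x)\geq 1-\epsilon'$ fails for inputs whose reconstruction error exceeds $\delta$, and the ``exceptional set'' $\{x:\|x-x'\|\leq 2\delta\}$ need not have small $p^*$-mass. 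To repair this you would have to add an assumption the lemma does not state (vanishing optimal distortion, or decoder output variance taken to zero). The paper's interpretation of ``nearly optimal $\phi$'' --- that the inference gap $D_{\kl}(q_{\phi}(z|x)\|p_{\theta}(z|x))$ is small, irrespective of how well the model reconstructs --- is exactly what lets it avoid this obstruction, since the mutual singularity of the true posteriors holds regardless of reconstruction quality. If you want to keep your latent-space-disjointness architecture, you should derive the concentration of $q_{\phi}(\cdot|x)$ on $A_x$ from closeness to the true posterior (which, for a near-deterministic decoder, concentrates on $f_{\theta}^{-1}(B(x,\delta))$) rather than from the reconstruction loss.
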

\begin{proof}
    Since the decoder is deterministic, different $x$ cannot map from the same $z$:
    \[
        \tv(p_{\theta}(z|x),p_{\theta}(z|x')) = 1.
    \]
    By triangle inequality:
    \begin{align}
        &\tv(q_{\phi}(z|x),q_{\phi}(z|x'))\nonumber\\
        \geq& \tv(q_{\phi}(z|x),p_{\theta}(z|x')) - \tv(q_{\phi}(z|x'),p_{\theta}(z|x'))\nonumber\\
        \geq& \tv(p_{\theta}(z|x),p_{\theta}(z|x')) - \tv(q_{\phi}(z|x'),p_{\theta}(z|x')) \nonumber\\
        &- \tv(q_{\phi}(z|x),p_{\theta}(z|x))\nonumber\\
        = & 1- \tv(q_{\phi}(z|x'),p_{\theta}(z|x')) - \tv(q_{\phi}(z|x),p_{\theta}(z|x)).\label{eq:lm1}
    \end{align}
    
    Since \cref{eq:elbo} gives $\log p_{\theta}(x) - D_{\kl}(q_{\phi}(z|x)\|p_{\theta}(z|x))$, optimal parameters yield $\mathbb{E}_{p^*(x)} \tv(q_{\phi}(z|x), p_{\theta}(z|x))\leq \epsilon$ and $\mathbb{E}_{p^*(x')} \tv(q_{\phi}(z|x'), p_{\theta}(z|x'))\leq \epsilon$. Combined with \cref{eq:lm1}, the proof follows.
\end{proof}

This shows that Gaussian encoders have minimal overlap between inputs, particularly in high dimensions where the sphere covering decreases. Thus, standard \ac{VAE} training compromises privacy through explicit reconstruction.
By contrast, if the decoder is Lipschitz the Gaussian encoders must actually overlap.

    
\begin{lemma}[Lemma 4 from \cite{gross2023differentially} reformulated]\label{lemma:lipschitz}
    Let $q_{\phi_1}$ and $q_{\phi_2}$ be two different nearly optimal encoders trained on neighboring datasets $D$, $D'$ according to LCD-VAE, then for any $\epsilon>0$ there is small $L$ such that: $\mathbb{E}_{p^*(x)}\tv(q_{\phi_1}(z|x),q_{\phi_2}(z|x)) \leq 2\sqrt{\epsilon}.$
    
\end{lemma}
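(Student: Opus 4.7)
The plan is to follow the same triangle-inequality strategy that proves \cref{lemma:1}, but this time interpose the two true Bayes posteriors $p_{\theta_1}(z|x)$ and $p_{\theta_2}(z|x)$ so that the two encoders are compared through their companion decoders. For every $x$ this yields
\begin{equation*}
\tv(q_{\phi_1}(z|x),q_{\phi_2}(z|x)) \leq T_1(x) + T_2(x) + T_3(x),
\end{equation*}
with $T_1(x)=\tv(q_{\phi_1}(z|x),p_{\theta_1}(z|x))$, $T_3(x)=\tv(q_{\phi_2}(z|x),p_{\theta_2}(z|x))$, and $T_2(x)=\tv(p_{\theta_1}(z|x),p_{\theta_2}(z|x))$. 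Taking expectation over $p^*(x)$ reduces the statement to bounding each of these three expected terms.

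The two outer terms $T_1$ and $T_3$ are exactly the amortisation gaps that the ELBO of \cref{eq:elbo} penalises through $D_{\kl}(q_{\phi_i}(z|x)\|p_{\theta_i}(z|x))$. Near-optimality of the LCD-VAE training therefore gives $\mathbb{E}_{p^*(x)}\kl(q_{\phi_i}(z|x)\|p_{\theta_i}(z|x))\leq \epsilon'$ for some small $\epsilon'>0$, and Pinsker's inequality together with Jensen converts this into $\mathbb{E}_{p^*(x)} T_1,\,\mathbb{E}_{p^*(x)} T_3 \leq \sqrt{\epsilon'/2}$. So the outer terms behave exactly like those already controlled in \cref{lemma:1}.

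Everything then hinges on the middle term $T_2$, the TV distance between Bayes posteriors induced by two decoders trained on neighbouring datasets. I would write $p_{\theta_i}(z|x)\propto p_{\theta_i}(x|z)\,p(z)$ and exploit the Gaussian form of the likelihood to express the log-density ratio $|\log p_{\theta_1}(x|z)-\log p_{\theta_2}(x|z)|$ as a quadratic function of $\|f_{\theta_1}(z)-f_{\theta_2}(z)\|$. Invoking the strategy of \cite{gross2023differentially}, an $L$-Lipschitz decoder trained near-optimally on a strongly regularised objective enjoys uniform stability with respect to a single data-point swap, so $\sup_z \|f_{\theta_1}(z)-f_{\theta_2}(z)\|$ scales with $L$. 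A standard conversion from a uniform log-density bound to TV then yields $\mathbb{E}_{p^*(x)} T_2 \leq cL$ on the effective support of $p^*$.

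The main obstacle is precisely this decoder-stability step, since Lipschitzness is a property in $z$, not in the parameters $\theta$: swapping one training point could in principle move $\theta$ arbitrarily, and the Lipschitz constraint only controls how such movement propagates to the outputs once it is combined with the near-optimality of both decoders under the same regularised loss. This delicate step is what I would import from \cite{gross2023differentially} rather than re-derive. Granting it, assembling the three contributions yields $\mathbb{E}_{p^*(x)}\tv(q_{\phi_1}(z|x),q_{\phi_2}(z|x)) \leq 2\sqrt{\epsilon'/2} + cL$, and choosing $L$ and $\epsilon'$ small enough that the sum is at most $2\sqrt{\epsilon}$ completes the proof.
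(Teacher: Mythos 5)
The paper itself gives no proof of \cref{lemma:lipschitz}: it is imported verbatim (``reformulated'') from Lemma~4 of \cite{gross2023differentially}, with only the identification $\epsilon = 2LR_z(\delta)R_x$ supplied afterwards. So the comparison has to be against that stated form of $\epsilon$ and the mechanism it implies, and there your proposal diverges in a way that matters. Your decomposition through $p_{\theta_1}(z|x)$ and $p_{\theta_2}(z|x)$ and your treatment of the two amortisation-gap terms are fine. The gap is the middle term $T_2$: you reduce it to a uniform parameter-stability claim, $\sup_z\|f_{\theta_1}(z)-f_{\theta_2}(z)\|\lesssim L$ under a one-point swap of the training set, and you yourself note that Lipschitzness in $z$ says nothing about how $\theta$ moves when the dataset changes. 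That objection is fatal, not merely delicate: two $L$-Lipschitz decoders can differ by an arbitrary constant offset ($f_{\theta_2}=f_{\theta_1}+c$ is still $L$-Lipschitz), so no bound of the form $\sup_z\|f_{\theta_1}(z)-f_{\theta_2}(z)\|\le cL$ can follow from the Lipschitz constraint plus near-optimality alone, and the step you propose to import from \cite{gross2023differentially} is not the step that reference actually supplies.

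The shape of the stated bound tells you what the intended argument is. Since $\epsilon=2LR_zR_x$ and the conclusion is $2\sqrt{\epsilon}$, the square root is being applied (via Pinsker) to a KL divergence of order $LR_zR_x$, once for each encoder --- not to the amortisation gap, and with no linear-in-$L$ stability term at all. Concretely: with a Gaussian likelihood $p_{\theta}(x|z)\propto\exp(-\tfrac12\|x-f_{\theta}(z)\|^2)$, an $L$-Lipschitz decoder makes $\log p_{\theta}(x|z)$ vary by at most $O(LR_zR_x)$ over $z\in B_z(R_z)$ for $x\in B_x(R_x)$, so \emph{each} posterior $p_{\theta_i}(z|x)$ is within KL $O(LR_zR_x)$ of the prior $p(z)$ --- regardless of which dataset the decoder was trained on. Both nearly optimal encoders are therefore within $\sqrt{\epsilon}$ in TV of the common anchor $p(z)$, and the triangle inequality through the prior gives $2\sqrt{\epsilon}$ with no comparison between $\theta_1$ and $\theta_2$ ever needed. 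Replacing your $T_2$ step with this ``flat likelihood $\Rightarrow$ posterior collapses to the prior'' argument closes the gap and also explains why the bound depends only on the hyperparameters $L$, $R_x$, $R_z$.
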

Here $\epsilon=2 L R_z(\delta) R_x$ where $R_x$ is the radius of data ball $B_x(R_x)=\{x\in\mathbb{R}^p,||x||\leq R_x\}$ and $R_z$ is the latent space radius such that ${P}\left(Z\in B_z^C(R_z)\right)\leq\delta$ shown in \cref{fig:dp-vae-bound}. This lemma guarantees one-shot privacy when i) latent space is sampled within $B_z(R_z)$, and ii) released data is sampled from the decoder's Gaussian output distribution. 

\begin{theorem}[Federated Lipschitz-Constrained Generation]
\label{lemma:federated_lipschitz_generation}
Consider trained decoders $f_{\theta_t}$ with Lipschitz constant $L$ across $T$ federated rounds. Suppose the data is released such that i) the latent space is sampled within $B_z(R_z)$, and, ii) the released data of size $|\mathcal{D}_s|$ is additionally sampled from the decoder's Gaussian output distribution. Then the data is $(2T|\mathcal{D}_s|\sqrt{\epsilon},T\delta_x)$-\ac{DP} where $\epsilon$. Recall that $\epsilon$ depends only on $L, R_x, R_z$ which are  hyperparameters, and $\delta_x$ is a (small) constant from the Monte-Carlo integration which decreases essentially with $O(1/\sqrt(n)$. 
\end{theorem}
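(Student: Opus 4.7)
The plan is to build up the bound by composition, starting from the single-sample TV guarantee in \cref{lemma:lipschitz} and extending first to a batch of $|\mathcal{D}_s|$ decoder samples within one federated round, then to $T$ sequential rounds. The two releasing conditions in the theorem statement (latent confinement to $B_z(R_z)$ and sampling from the decoder's Gaussian output) are precisely what is needed to invoke \cref{lemma:lipschitz} at each round.

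First I would fix a round $t$ and consider a single released sample. \cref{lemma:lipschitz} gives that the two encoders trained on neighbouring datasets $D, D'$ are indistinguishable in expected TV by at most $2\sqrt{\epsilon}$, with $\epsilon = 2LR_z(\delta)R_x$ controlled entirely by the Lipschitz constant and the data/latent radii. Pushing this through the deterministic decoder and the Gaussian output sampler via the data-processing inequality transfers the bound to the released sample itself, so each per-sample release is at TV distance at most $2\sqrt{\epsilon}$ from its neighbouring counterpart. Using the standard conversion $\tv(P,Q)\leq \alpha \Rightarrow (\alpha,0)$-\ac{DP} (valid in the small-$\epsilon$ regime where $e^\alpha \geq 1+\alpha$), a single sample from a single round is $2\sqrt{\epsilon}$-\ac{DP}. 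Basic sequential composition across the $|\mathcal{D}_s|$ i.i.d. synthetic samples drawn from the same decoder then yields a conditional per-round guarantee of $(2|\mathcal{D}_s|\sqrt{\epsilon}, 0)$-\ac{DP}, where the conditioning is on the event $E_t$ that all latent draws land in $B_z(R_z)$ and that the empirical Monte-Carlo approximation of $\mathbb{E}_{p^*(x)}\tv$ is close to its true value. Absorbing $P(E_t^c)$ into an additive slack $\delta_x$ gives per-round $(2|\mathcal{D}_s|\sqrt{\epsilon}, \delta_x)$-\ac{DP}, with the stated $O(1/\sqrt{n})$ rate in $\delta_x$ following from a Hoeffding bound on the empirical mean of a bounded TV random variable. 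Finally, since ALIGN-FL's stateful updates form an adaptive sequence of mechanisms on the same underlying data across the $T$ rounds, the sequential composition theorem for approximate \ac{DP} gives $(2T|\mathcal{D}_s|\sqrt{\epsilon}, T\delta_x)$-\ac{DP}, matching the claim.

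The main obstacle I anticipate is the within-round step: \cref{lemma:lipschitz} bounds an \emph{expectation} over $p^*(x)$, while \ac{DP} demands a \emph{worst-case} statement over neighbouring datasets and arbitrary measurable outputs. The argument must be arranged so that the bad event, where a particular realised sample's TV exceeds the expected value, absorbs cleanly into $\delta_x$ without inflating the leading $\epsilon$ parameter. This is precisely where the Monte-Carlo integration error enters and why the stated $\delta_x$ aggregates at the round level rather than scaling multiplicatively with $|\mathcal{D}_s|$ inside the $\delta$ coordinate. A secondary subtlety is ensuring that the two decoders across rounds remain Lipschitz-$L$ uniformly in $t$, so that $\epsilon$ remains constant across the composition and does not itself grow with $T$; this should follow directly from the Lipschitz penalty $\mathcal{L}_{\text{GP}}$ being enforced at every local round.
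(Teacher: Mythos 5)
The paper does not actually supply a proof of this theorem---it states that the proof ``is omitted but follows basically the lines of \cite{gross2023differentially} \dots{} and a worst-case privacy composition''---so there is no line-by-line argument to compare against. Your overall skeleton (a per-release guarantee derived from \cref{lemma:lipschitz}, basic composition over the $|\mathcal{D}_s|$ synthetic samples within a round, then $T$-fold composition across rounds, with the Monte-Carlo error collected in $\delta_x$) is consistent with that sketch and with the linear form of the stated bound.

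However, there is a genuine gap at the single most important step: the conversion $\tv(P,Q)\leq \alpha \Rightarrow (\alpha,0)$-DP is false. A total-variation bound of $\alpha$ only yields a $(0,\alpha)$-DP guarantee, i.e.\ it controls the \emph{additive} coordinate, not the multiplicative one; the likelihood ratio $P(S)/Q(S)$ can be unbounded on a set of probability $\alpha$ (take $Q$ supported on an event of measure $\alpha$ where $P$ vanishes). The inequality $e^{\alpha}\geq 1+\alpha$ that you invoke holds for all $\alpha$ and does not rescue the implication. If you carry your per-sample guarantee through the composition as stated, you end up with something like $(0,\,2T|\mathcal{D}_s|\sqrt{\epsilon}+T\delta_x)$-DP, which is not the claimed $(2T|\mathcal{D}_s|\sqrt{\epsilon},\,T\delta_x)$-DP and is a qualitatively weaker statement. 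To place $2\sqrt{\epsilon}$ in the $\epsilon$-coordinate you need a pointwise bound on the log-likelihood ratio of the released samples, and this is precisely the role of condition (ii): because the release is drawn from the decoder's \emph{Gaussian} output distribution, and the decoder means $f_{\theta}(z)$, $f_{\theta'}(z)$ for neighboring datasets differ by an amount controlled by $L$, $R_z$, and $R_x$ on the event $z\in B_z(R_z)$, the release acts as a Gaussian mechanism with bounded sensitivity, which is what delivers an $(\varepsilon',\delta')$ guarantee with a genuine multiplicative part. Your proposal never uses the Gaussianity of the output---it treats the output sampler only as a post-processing step under the data-processing inequality---so it cannot recover the theorem as stated. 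The composition steps and the uniform-in-$t$ Lipschitz observation are otherwise fine.
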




This bound ensures that the cumulative sensitivity of generated data across federated rounds remains controlled, providing privacy protection that scales linearly with the number of communication rounds. The proof is omitted but follows basically the lines of \cite{gross2023differentially} correcting their argument which actually provides only a one-shot privacy guarantee in the release of the synthetic data. Note that the theorem does so by virtue of a slightly but essentially different sampling process and a worst-case privacy composition. 

\begin{figure}[htp]
\centering
\includegraphics[width=0.48\linewidth]{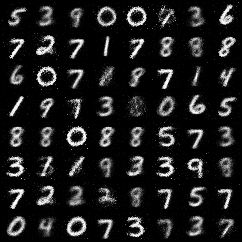}
\hfill
\includegraphics[width=0.48\linewidth]{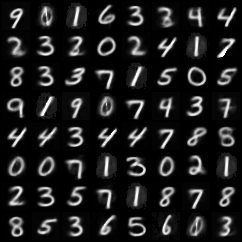}
\caption{Image generation quality of global decoder with ALIGN-FL. Left: \ac{DP-SGD} $(\epsilon=10, \delta=10^{-5})$. Right: \ac{LCD-VAE}}
\label{fig:gen_quality_mnist}
\end{figure}
\vspace{-15pt}
\begin{figure}
\centering
\includegraphics[width=0.48\linewidth]{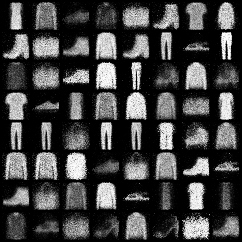}
\hfill
\includegraphics[width=0.48\linewidth]{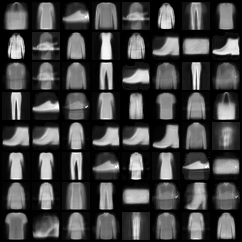}
\caption{Image generation quality of global decoder with ALIGN-FL. Left: \ac{DP-SGD} $(\epsilon=10, \delta=10^{-5})$. Right: \ac{LCD-VAE}}
\label{fig:gen_quality_fmnist}
\end{figure}
\vspace{-18pt}
\begin{figure}
\centering
\includegraphics[width=0.48\linewidth]{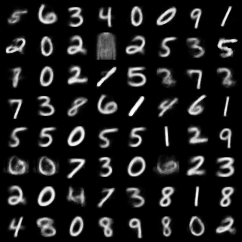}
\hfill
\includegraphics[width=0.48\linewidth]{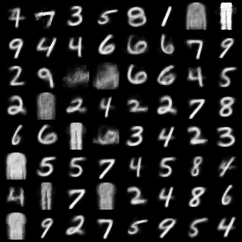}
\caption{No-\ac{DP} baseline. Left: Images sampled directly from the decoder. Right: Image reconstruction through auto-encoder.}
\label{fig:no_dp_base}
\end{figure}
\vspace{-15pt}
\begin{figure}
\centering
\includegraphics[width=0.48\linewidth]{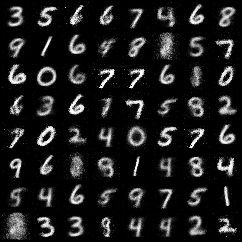}
\hfill
\includegraphics[width=0.48\linewidth]{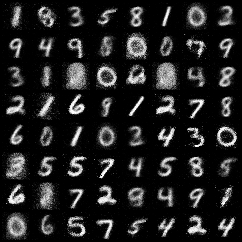}
\caption{Visualization of selective \ac{DP-SGD} $(\epsilon=10, \delta=10^{-5})$ application to \textbf{\ac{VAE} decoder only}. Left: Images sampled directly from the decoder. Right: End-to-end image reconstruction through auto-encoder.}
\label{appendix:dp-sgd_decoder}
\end{figure}
\end{document}